\newtheorem{assumption}{Assumption}
\newtheorem{theorem}{Theorem}[section]
\crefname{section}{Sec.}{sections}
\crefname{figure}{Fig.}{figures}
\crefname{table}{Table}{tables}
\crefname{assumption}{Assumption}{Assumptions}
\newcommand{\yaxis}{$y$\nobreakdash-axis}
\newcommand{\zaxis}{$z$\nobreakdash-axis}
\newcommand{\xzplane}{$xz$\nobreakdash-plane}
\newcommand{\B}[1]{\bm{#1}}
\newcommand{\C}[1]{\mathcal{#1}}
\newcommand{\expnumber}[2]{{#1}\mathrm{e}{#2}}
\newcommand{\norm}[1]{\bigl\lVert#1\bigr\rVert}
\newcommand{\diag}{\mathop{\mathrm{diag}}}
\newcommand{\ctx}{\bar{C}_{Tx}}
\newcommand{\ctz}{\bar{C}_{Tz}}
\newcommand{\cfs}{\bar{C}_{S}}
\newcommand{\cl}{C_L}
\newcommand{\clo}{C_{L_0}}
\newcommand{\cla}{C_{L_1}}
\newcommand{\cd}{C_D}
\newcommand{\cdo}{C_{D_0}}
\newcommand{\cda}{C_{D_1}}
\newcommand{\cdaa}{C_{D_2}}
\newcommand{\cy}{C_Y}
\newcommand{\inertia}{\B{J}}
\newcommand{\pos}{\B{p}}
\newcommand{\dpos}{\dot{\pos}}
\newcommand{\pd}{\B{p}_d}
\newcommand{\dpd}{\dot{\B{p}}_d}
\newcommand{\perr}{\tilde{\B{p}}}
\newcommand{\vel}{\B{v}}
\newcommand{\dvel}{\dot{\vel}}
\newcommand{\vr}{\B{v}_r}
\newcommand{\dvr}{\dot{\B{v}}_r}
\newcommand{\verr}{\tilde{\B{v}}}
\newcommand{\dverr}{\dot{\verr}}
\newcommand{\Rot}{\B{R}}
\newcommand{\dRot}{\dot{\Rot}}
\newcommand{\Rd}{\B{R}_d}
\newcommand{\Rerr}{\tilde{\B{R}}}
\newcommand{\qerr}{\tilde{\B{q}}_{v}}
\newcommand{\qoerr}{\tilde{q}_0}
\newcommand{\dqoerr}{\dot{\tilde{q}}_0}
\newcommand{\angular}{\bm{\omega}}
\newcommand{\dangular}{\dot{\angular}}
\newcommand{\angulard}{\bm{\omega}_d}
\newcommand{\angularref}{\bm{\omega}_r}
\newcommand{\dangularref}{\dot{\bm{\omega}_r}}
\newcommand{\angularerr}{\tilde{\angular}}
\newcommand{\fr}{\bar{\B{f}}}
\newcommand{\frb}{\fr_b}
\newcommand{\mr}{\bar{\bm{\tau}}}
\newcommand{\gravity}{\B{g}}
\newcommand{\fb}{\B{f}_b}
\newcommand{\fbhat}{\hat{\B{f}}_b}
\newcommand{\mb}{\bm{\tau}_b}
\newcommand{\mask}{\B{M}}
\newcommand{\fT}{\B{f}_T}
\newcommand{\fThat}{\hat{\B{f}}_T}
\newcommand{\fA}{\B{f}_A}
\newcommand{\fAhat}{\hat{\B{f}}_A}
\newcommand{\thhat}{\hat{\bm{\theta}}}
\newcommand{\thinit}{\bm{\theta}_0}
\newcommand{\dthhat}{\dot{\thhat}}
\newcommand{\therr}{\tilde{\bm{\theta}}}
\newcommand{\ferr}{\bm{\zeta}}
\newcommand{\vi}{\B{v}_i} 
\newcommand{\vw}{\B{v}_w} 
\newcommand{\ix}{\hat{\B{x}}_i}
\newcommand{\iy}{\hat{\B{y}}_i}
\newcommand{\iz}{\hat{\B{z}}_i}
\newcommand{\Ri}{\B{R}_i}
\newcommand{\Ralpha}{\B{R}_{\alpha}}
\newcommand{\cov}{\B{P}}
\newcommand{\basis}{\bm{\Phi}}
\newcommand{\basisf}{\B{W}}
\newcommand{\prederr}{\B{e}}
\newcommand{\am}{\B{a}_m}
\newcommand{\forgetting}{\lambda}
\newcommand{\robustify}{\sigma}
\newcommand{\vinf}{V_\infty}
\newcommand{\sref}{S_\text{ref}}
\title{\LARGE \bf
Adaptive Nonlinear Control of Fixed-Wing VTOL \\with Airflow Vector Sensing}
\author{Xichen Shi$^{1}$, Patrick Spieler$^{1}$, Ellande Tang$^{1}$, Elena-Sorina Lupu$^{1}$, Phillip Tokumaru$^{2}$, and Soon-Jo Chung$^{1}$
\thanks{$^{1}$Xichen Shi, Patrick Spieler, Ellande Tang, Elena-Sorina Lupu and Soon-Jo Chung are with California Institute of Technology. \texttt{\{xshi, pspieler, ellandet, eslupu, sjchung\}@caltech.edu}.}
\thanks{$^{2}$Phil Tokumaru is with AeroVironment, Inc.
\texttt{tokumaru@avinc.com}.}
\thanks{The authors thank M. Gharib for his technical guidance. This work is in part funded by AeroVironment, Inc.}
}
\begin{document}

\maketitle
\thispagestyle{empty}
\pagestyle{empty}

\begin{abstract}
Fixed-wing vertical take-off and landing (VTOL) aircraft pose a unique control challenge that stems from complex aerodynamic interactions between wings and rotors. Thus, accurate estimation of external forces is indispensable for achieving high performance flight. In this paper, we present a composite adaptive nonlinear tracking controller for a fixed-wing VTOL. The method employs online adaptation of linear force models, and generates accurate estimation for wing and rotor forces in real-time based on information from a three-dimensional airflow sensor. The controller is implemented on a custom-built fixed-wing VTOL, which shows improved velocity tracking and force prediction during the transition stage from hover to forward flight, compared to baseline flight controllers.
\end{abstract}


\section{Introduction}
Vertical take-off and landing (VTOL) aircraft have been an area of intense research for most of the past century. A variety of VTOL craft have been developed in the military and civilian spaces. The operational simplicity associated with not requiring a runway and being able to hover in place often outweigh the negative aspects of the design complexity. In recent years, improvements in battery technology, computing power, and sensor availability have spurred the development of multi-rotors.
However, a standard multi-rotor lacks the efficiency for long range flight. Married with the trend in small-scale electric drones, the class of "hybrid" VTOL aircraft with both a fixed-wing and rotors has recently risen in popularity. They use lifting surfaces to enable longer range and endurance flights and keep VTOL capabilities, eliminating the need for a substantial runway. These types of vehicles are now re-branded as ``flying cars'' and are gaining traction commercially~\cite{uberelevate}.

Depending on the configuration of thrusters on the craft, fixed-wing VTOL can be categorized as tilt-rotor, tail-sitter, or copter-plane. Although different in geometry, the underlying control logic is similar. Most controllers designed for such crafts rely on two separate schemes, one for VTOL and one for fixed-wing. A transition strategy is designed to switch between the two. Because of the hybrid nature during this transition period, complex interactions between propellers and wings pose challenge for accurate and safe flight maneuvers. Early works achieve this transition by overlapping the operation envelope of the two controllers. In~\cite{frank2007hover,stone2008flight}, reference commands were sent to the VTOL controller such that the vehicle would either reach a high-speed or a low pitch angle state triggering the fixed-wing controller to become active. A common technique for tilt-rotor transition is to vary tilt angles following a monotonic schedule, during which the controller stabilizes the craft~\cite{chowdhury2012back,park2013fault}. However, little attention was given to aerodynamic and flight-dynamic modeling in this scenario. More recent methods utilize numerical optimization to solve for a trajectory based on accurate vehicle dynamics. \cite{verling2017model,oosedo2017optimal} solve for transition trajectories offline and then deploy feedback tracking controllers to execute them online. To further improve the performance, \cite{zhou2017unified,ritz2017global} propose online optimization-based controller with global aerodynamic model for tail-sitters. Such controllers can give solutions between any global states, as long as the onboard computer can solve the problem in real time.

\begin{figure}[!t]
\centering
\includegraphics[width=\linewidth]{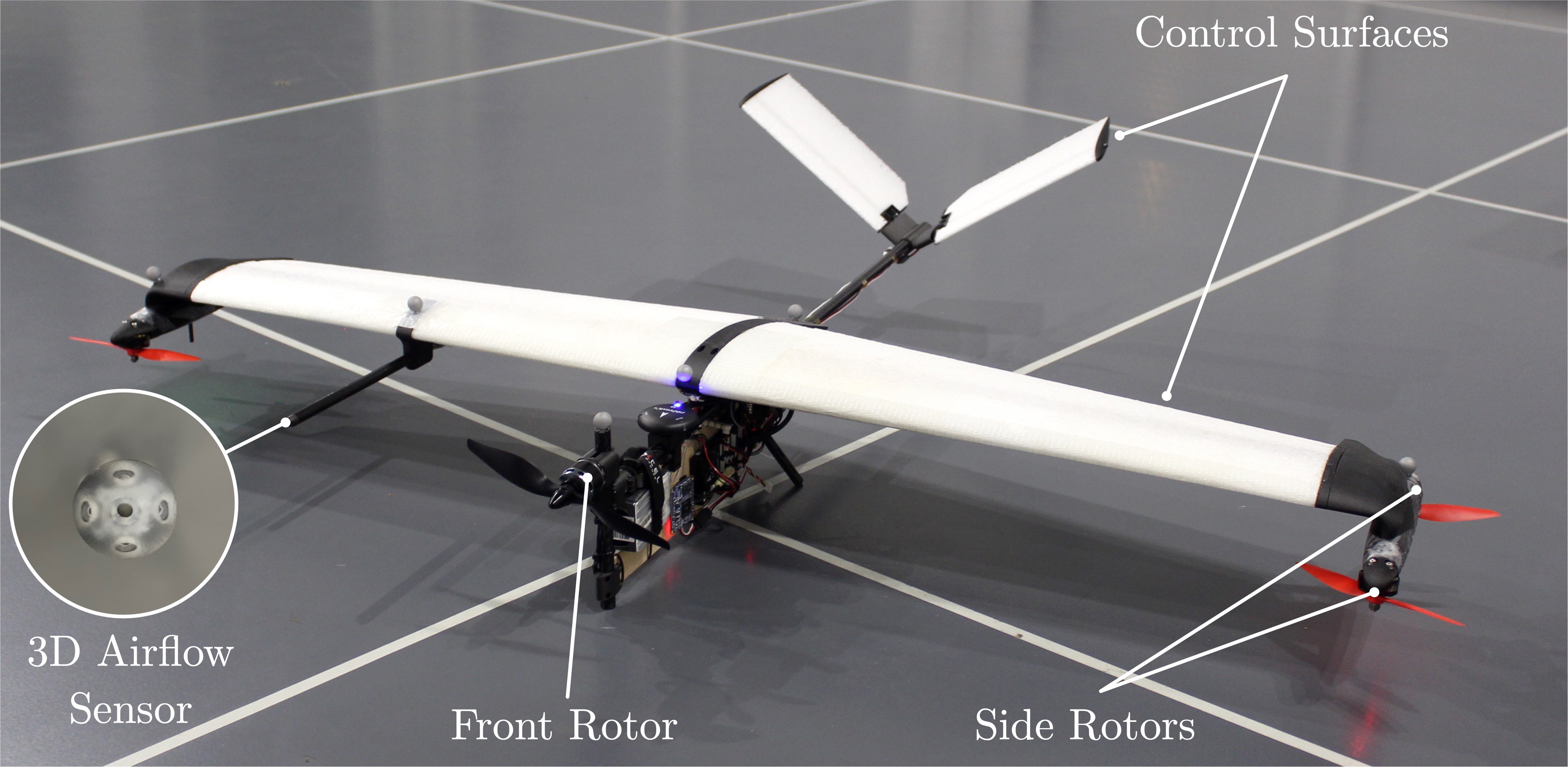}
\caption{Fixed-wing VTOL platform with wing and multi-rotor}
\label{fig:av_vtol}
\vspace{-4 mm}
\end{figure}

Despite such success, heavy computation requirements still put a burden on the craft, especially for small scale UAVs with limited payload and power. Furthermore, iterative solvers are often several orders of magnitude slower than a simple feedback controller. And execution speed can be critical for agile flyers. There has been some interest in designing a unified feedback controller for fixed-wing VTOL, such as in~\cite{pucci2013towards,bulka2018universal}, and our prior work~\cite{shi2018nonlinear}. The success of these controllers relies primarily on the accurate prediction of aerodynamic forces, which in turn requires high-fidelity models and accurate sensor feedback for states relevant to such forces. Surprisingly, there has been little work on these areas; complex aerodynamic interactions between wing and rotors that are crucial to fixed-wing VTOL transition are often ignored in the prior work. Although efforts have been made to estimate aerodynamic states such as angle-of-attack and sideslip angle~\cite{johansen2015estimation,tian2018model}, no recent work has been using such information directly in a feedback control manner. On the other hand, adaptive flight control has seen much progress over the years, where the vehicle model is adapted via either aerodynamic coefficients~\cite{farrell2005backstepping,gavilan2014adaptive} or neural network parameters~\cite{calise1998nonlinear,lee2001nonlinear}.

\emph{Contribution and Organization:} We build on the prior work in~\cite{shi2018nonlinear} where we proposed a general control architecture for fixed-wing VTOL. Here we focus on creating a velocity tracking controller with an adaptive force model and a novel 3D airflow sensor for accurate aerodynamic force prediction. The paper is organized as follows: In \cref{sec:system}, the general fixed-wing VTOL dynamics, linear force model, and control architecture are introduced. In \cref{sec:f_alloc}, we propose adaptive force allocation and prove convergence of tracking and prediction errors. Model fitting and the 3D airflow sensor are presented in \cref{sec:aero}. Experimental results comparing several control techniques are shown in \cref{sec:exp}. Lastly, concluding remarks are stated in \cref{sec:conclusion}.

\section{System Overview}
\label{sec:system}
We have constructed a small-scale fixed-wing VTOL shown in~\cref{fig:av_vtol}, with four rotors mounted vertically, a forward-facing thruster, an all-moving V-tail, and a wing surface.
\subsection{Fixed-Wing VTOL Dynamic Model}
A six degree-of-freedom (DOF) dynamics model for VTOL aircraft is considered. The system states are defined by inertial position $\pos$ and velocity $\vel$; attitude as rotation matrix $\Rot \in SO(3)$; and angular velocity $\angular$ in the body frame. The dynamics are expressed as:
\begin{align}
    \dpos &= \vel &    \dvel &= \gravity + \Rot \fb \label{eq:pos_dyn} \\ 
    \dRot &= \Rot \B{S}(\angular) & \inertia \dangular &= \B{S}(\inertia \angular) \angular + \mb \label{eq:att_dyn}
\end{align}
where $\inertia \in \mathbb{R}^{3\times3}$ is the inertia matrix of the vehicle in body-frame, $\gravity$ is the constant gravity vector in the inertial frame and $\B{S}(\cdot): \mathbb{R}^3 \to SO(3)$ is a skew-symmetric mapping such that $\B{a} \times \B{b} = \B{S}(\B{a})\B{b}$. External forces and moments on the vehicle are grouped into $\fb$ and $\mb$. $\fb$ is normalized with mass and has units of \si{m/s^2}. We can decompose 
\begin{equation}
    \fb = \fT + \fA \label{eq:fb}
\end{equation}
since the body forces of flying vehicles generally consist of only thruster forces $\fT$ and body aerodynamic force $\fA$.
In this work, we focus solely on the force modeling and its use in precise velocity control of a fixed-wing VTOL through transition flight at various air speeds. The moment $\mb$ is assumed to be properly accounted for by the attitude controller. This is true for general VTOL aircraft with well-tuned inner-loop attitude control.

\subsection{Force Models}

Forces on the wing and rotor are determined by relative wind velocity. We define this incident wind velocity on the vehicle as $\vi$, which in body frame is expressed as
\begin{equation}
    \vi = \Rot^\top\left(\vel - \vw\right) \label{eq:vi}
\end{equation}
where $\vw$ is the external wind velocity in inertial frame. Given $\vi$ we can define the following quantities related to force calculations
\begin{equation}
    \vinf = \norm{\vi},\ \alpha = \arctan\left(\frac{v_{iz}}{v_{ix}}\right),\ \beta = \arcsin\left(\frac{v_{iy}}{\vinf}\right) \label{eq:alphabeta}
\end{equation}
which are incident wind speed $\vinf$, angle-of-attack $\alpha$, and sideslip angle $\beta$.

\begin{figure}
\centering
\subfloat[Propeller Force Diagram]{
        \includegraphics[height=0.8in]{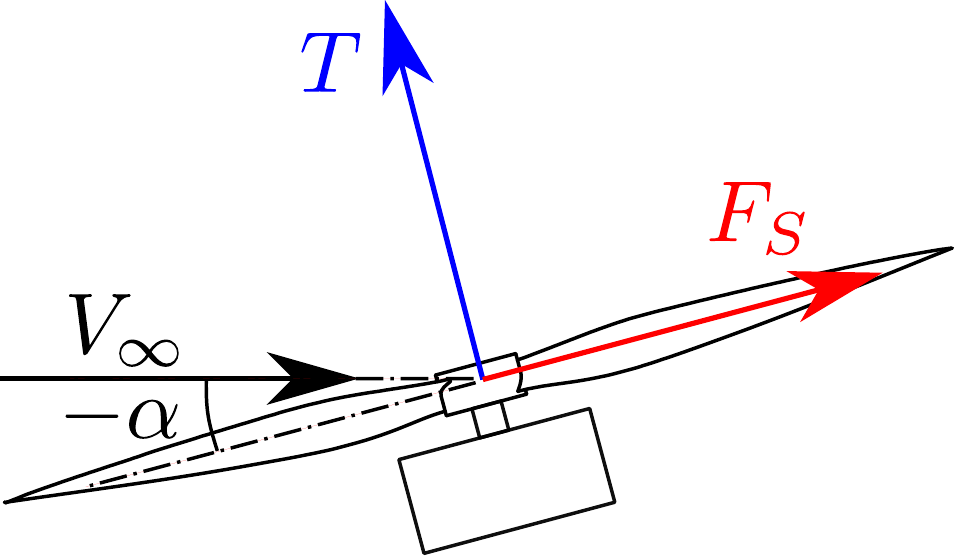}
        \label{fig:fT}
    }
\subfloat[Wing Force Diagram]{
        \includegraphics[height=0.8in]{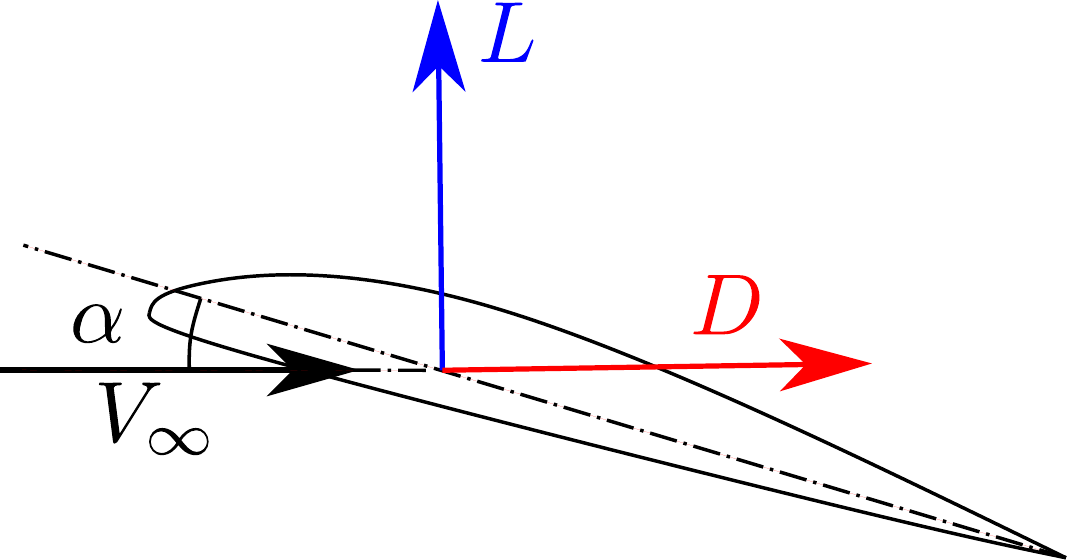}
        \label{fig:fA}
    }
\caption{Diagrams with definition of forces on different propeller and wing}
\vspace{-5 mm}
\end{figure}

As shown in~\cref{fig:fT}, typical propellers generate an axial thrust and experience a side force tangential to the rotor plane. A common model for static propeller thrust is
\begin{equation}
T = C_T \rho d^4 n^2 \label{eq:model_T}
\end{equation}
with non-dimensional thrust coefficient $C_T$, air density $\rho$, propeller diameter $d$ and propeller rotation speed $n$. The rotor side force $F_S$ however lacks simplified models, even though measurements indicate that the side forces produced by vertical thrusters often incur a significant drag increase during forward flight. Hence, we derive a canonical model from wind tunnel test data and dimensional analysis
\begin{equation}
    F_S = C_{S} \rho n^{k_1} \vinf^{2-k_1} d^{2+k_1}\left(\left(\frac{\pi}{2}\right)^2 - \alpha^2 \right) \left(\alpha + k_2\right) \label{eq:model_Fs}
\end{equation}
where $\alpha$ is in radians. Details about the model are described in~\cref{ssec:aerotest}. By assuming a linear relationship between $n$ and the throttle signal $u$, and that only the vertical thrusters produce significant side forces, we get the following force estimate
\begin{align}
    T_x &= \ctx u_x^2,\quad T_z = \ctz u_z^2, \label{eq:model_Txz} \\
    F_S &= \cfs \cdot G(\alpha, \vinf, u_z). \label{eq:model_FS}
\end{align}
The coefficients $\ctx$, $\ctz$, and $\cfs$ are non-dimensional coefficients combined with constants in~\cref{eq:model_T,eq:model_Fs}. $G(\alpha, \vinf, u_z)$ is the basis from \cref{eq:model_Fs} using fixed $k_1$ and $k_2$ fitted from data.

For aerodynamic forces on wing, we define lift $L$ and drag $D$ in body \xzplane \ shown in~\cref{fig:fA}, and side force $Y$ in body \yaxis. They are often non-dimensionalized with
\begin{equation*}
    L =\frac{1}{2}\rho \vinf^2\sref\cl, \
	D =\frac{1}{2}\rho \vinf^2\sref\cd, \
    Y =\frac{1}{2}\rho \vinf^2\sref\cy.
\end{equation*}
with $\sref$ as the reference wing area. A common linear aerodynamics model for $\cl$ and $\cd$ is
\begin{align}
    \cl &= \clo + \cla\alpha \label{eq:model_cl} \\
    \cd &= \cdo + \cda\alpha + \cdaa\alpha^2, \label{eq:model_cd}
\end{align}
while the side force $\cy$ remains unmodeled with the assumption that $\cy=0$ at $\beta=0$. This assumption is valid for most aircraft symmetric in the \xzplane. In~\cref{sec:f_alloc}, we use this fact to constrain $Y = 0 $ by ensuring $\beta=0$.

From~\cref{eq:model_Txz,eq:model_FS,eq:model_cl,eq:model_cd}, the estimated thruster and aerodynamic forces can be expressed as
\begin{align}
    \fThat &=
    \begin{bmatrix}
    \ctx u^2_x \\
    0  \\
    -\ctz u^2_z 
    \end{bmatrix}     
    +
    \begin{bmatrix}
    -\cfs G_z(\alpha, \vinf, u_z)\cos(\bar{\beta}) \\
    -\cfs G_z(\alpha, \vinf, u_z)\sin(\bar{\beta}) \\
    0
    \end{bmatrix}  \label{eq:fThat}\\
    \fAhat &=
        \frac{1}{2}\rho \sref  \vinf^2
    \begin{bmatrix}
    \cl(\alpha) \sin\alpha - \cd(\alpha) \cos\alpha \\
    -\cy(\beta) \\
    -\cl(\alpha) \cos\alpha - \cd(\alpha)\sin\alpha
    \end{bmatrix}.
    \label{eq:fAhat}
\end{align}
Note that $\bar{\beta} = \arctan\left(v_{iy}/v_{iz}\right)$ is slightly different from sideslip. We can express the overall body force estimate
\begin{equation*}
    \fbhat = \fThat + \fAhat = \basis (\vi, u_x, u_z) \thhat
\end{equation*}
where $\basis$ denotes the model basis collected from~\cref{eq:model_cl,eq:model_cd,eq:fThat,eq:fAhat}. The model parameter vector is
$$\thhat= \left[\ctx,\ \ctz,\ \cfs,\ \cdo,\ \cda,\ \cdaa,\ \clo,\ \cla \right]^\top.$$

\subsection{Control Architecture}
In~\cite{shi2018nonlinear}, we proposed a control architecture for general VTOL aircraft subject to aerodynamic effects. It uses total body forces and moments as inputs to track 3-dimensional position trajectory $\pd(t)$. We define  position error as $\perr = \pos - \pd$. Assume attitude trajectory $\Rd(t)$, $\angulard(t)$ is defined from $\pd(t)$. We can specify the reference velocity $\vr$ and angular rate $\angularref$ as
\begin{align}
    \vr = \dpd - \B{{\Lambda}}_p \perr,\qquad
    \angularref = \Rerr^\top \angulard - \B{{\Lambda}}_q \qerr \label{eq:wr}
\end{align}
where $\B{{\Lambda}}_p$ and $\B{{\Lambda}}_q$ are positive definite gain matrices for position and quaternion errors. Thus, the force and moment controllers become
\begin{align}
    \fr &= -\gravity + \dvr - \B{K}_v \verr \label{eq:f_ctrl}\\
    \mr &= \inertia \dangularref -\B{S}\left(\inertia \angular\right)\angularref - \B{K}_{\omega} \angularerr, \label{eq:m_ctrl}
\end{align}
with error $\verr = \vel - \vr$ and $\angularerr = \angular - \angularref$, where (\ref{eq:m_ctrl}) is from \cite{bandyopadhyay2016nonlinear}. The attitude error expressed as the rotation matrix $\Rerr = \Rd^\top \Rot$ can be converted to a constrained quaternion error $\tilde{\B{q}} = [\qoerr, \qerr]$ with $\qoerr > 0$~\cite{shi2018nonlinear}.

The controllers in~\cref{eq:f_ctrl,eq:m_ctrl} are written in general form for any aircraft type, but each different type relies on $\fr$ and $\mr$ achieved through varying actuator commands or vehicle attitudes. We name such process as force allocation~\cite{shi2018nonlinear}, as shall be seen in the following section.

\section{Adaptive Force Allocation}
\label{sec:f_alloc}
The reference force command is achieved through a two-step process, solving for desired attitude and thrust commands respectively. First, desired attitude $\Rd$ is obtained through
\begin{equation}
    \Rd \cdot \left(\fThat(\vi, u_x, u_z) + \fAhat(\vi) \right)_d = \fr
    \label{eq:fbar_solve}
\end{equation} 
with $\vi$ also a function of $\Rd$ as seen from~\cref{eq:vi}. Second, we solve for the thrust commands by minimizing the force residue $\norm{\Rot \left(\fThat+\fAhat\right) - \fr}$ in current frame. 

\subsection{Desired Attitude Determination}
A solution to~\cref{eq:fbar_solve} for a multicopter relies on the differential flatness property, with $\fAhat=0$ or considering only drag effect~\cite{mellinger2011minimum,faessler2017differential}. For a fixed-wing aircraft, a similar property can be derived by constraining the vehicle in the coordinated flight frame~\cite{hauser1997aggressive}. We propose a simplified solution following the same procedure in~\cite{shi2018nonlinear}, with the assumption of small $\alpha$ in desired flight condition. 

Let $\frb = \Rot^\top \fr$ denote the desired body frame force. Given measured $\vi$, the incident wind frame axes are defined as
\begin{equation*}
\ix = \frac{\vi}{\norm{\vi}}, \ \ \iy = \frac{\vi\times \frb}
{\norm{\vi\times \frb}},\ \
\iz= \ix \times \iy.
\end{equation*}
Incident wind frame is then represented in body frame as
\begin{equation*}
    \Ri= [\ix, \iy, \iz].
\end{equation*}
in which no side force is required (i.e $\frb \cdot \iy = 0$). Wing lift is then prioritized to produce $\frb$ normal to $\vi$ by setting desired angle of attack $\alpha_d$ according to the lift model
\begin{equation}
\alpha_d =
\begin{cases}
 \frac{-\frb\cdot\iz - L_0}{L_1} &\text{if} \quad -\frb\cdot\iz \leq L_{\max}\\
 \alpha_{\max} &\text{if} \quad  -\frb\cdot\iz > L_{\max} 
\end{cases}
\label{eq:alphad}
\end{equation}
where $L_0 = \frac{1}{2}\rho\vinf^2\sref \clo$ and $L_1 = \frac{1}{2}\rho\vinf^2\sref \cla$. The incident wind frame is then rotated around $\iy$ by $\alpha_d$ denoted by rotation matrix $\Ralpha$. The desired attitude can be calculated via consecutive rotations
\begin{equation}
    \Rd = \Rot \Ri \Ralpha \label{eq:Rd}
\end{equation}
The small angle approximation is often used implicitly for the fixed-wing aircraft control problems. In contrast, our approach explicitly uses the lift model in~\cref{eq:model_cl} to output the desired attitude, and relies on accurate estimation of the incident velocity $\vi$ with a dedicated sensor described in~\cref{ssec:3dsensor}.

\subsection{Thruster Force Allocation}
At the current attitude, we calculate thruster force through
\begin{equation}
   \mask \fThat = \mask\left(\Rot^\top \fr - \fAhat\right) \label{eq:f_proj},
\end{equation}
where $\mask$ is the matrix masking out the component of force in the body axis that is unable to achieve by thruster input.$ \fAhat$ is estimated from~\cref{eq:fAhat,eq:alphabeta} using current $\vi$ measured from 3D airflow sensor. In the case of our vehicle from~\cref{fig:av_vtol}, the mask becomes $\mask =\diag{(1,0,1)}$ given the lack of \yaxis~thrust. The thrust commands $u_x$ and $u_z$ can be solved by substitute~\cref{eq:fThat} into~\cref{eq:f_proj}.


\subsection{Force Model Composite Adaptation}
With reference force realized through~\cref{eq:alphad,eq:Rd,eq:f_proj}, the closed-loop dynamics of the velocity tracking error $\verr$ is
\begin{equation}
\begin{split}
    \dverr &= \gravity + \Rot\fb - \dvr + \left(\fr - \fr\right) + (\Rot\fbhat - \Rot\fbhat) \\
    &= -\B{K}_v\verr + \left(\Rot\fbhat - \fr\right) - \Rot(\fbhat - \fb) \label{eq:verr_cl} \\
    &= -\B{K}_v\verr + \ferr - \Rot \basis \therr.
\end{split}
\end{equation}
The force tracking error $\ferr  = \big(\Rot\fbhat - \fr\big)$ is affected by attitude difference $\Rerr$ and residual side force after projection in~\cref{eq:f_proj}. Any force error in the body $y$ direction cannot be directly compensated for by thrusters and has to be achieved through an attitude change.

The composite adaptation technique is employed to facilitate convergence of both tracking and prediction errors~\cite{slotine1989composite}. We compute prediction error via low-pass filtered onboard accelerometer measurement $\am = [\B{a}-\gravity]_f$ and basis calculation $\basisf = [\basis]_f$, where $[\cdot]_f$ here is a strictly-positive-real (SPR) filter~\cite{slotine1991applied,farrell2006adaptive}. Then the prediction error $\prederr$ is simply the difference between the two
\begin{equation}
    \prederr = \basisf \thhat - \am.
    \label{eq:prediction_error}
\end{equation}
We are now ready to define parameter update law:
\begin{align}
    \dot{\thhat} &= \cov\left(\basis^\top \Rot^\top \verr - \basisf^\top  \prederr \right) - \robustify \cov \left(\thhat - \thinit\right) \label{eq:th_adapt} \\
    \dot{\cov} &= -\cov \basisf^\top \basisf \cov + \forgetting \cov \label{eq:P_adapt}.
\end{align}
To improve the robustness, a damping term $\robustify$ is added to pull update toward initial estimates $\thinit$, which is obtained through model fitting described in~\cref{ssec:aerotest}. $\forgetting$ makes \cref{eq:P_adapt} exponentially forget data for continual adaptation~\cite{slotine1991applied}. 

\subsection{Stability Property}
From~\cref{eq:verr_cl}, it can be seen that the tracking of the velocity profile relies on the force prediction as well as attitude control. Due to the realistic aerodynamic effects on a fixed-wing VTOL being nonlinear and unsteady, we make the following assumption relating $\ferr$ and $\qerr$
\begin{assumption}
The vehicle is confined to operate within a linear force model range and $\norm{\ferr} \leq \C{F}\norm{\qerr}$, with $\C{F}$ being a constant related to maximum aerodynamic force.
\label{assm:ztoq}
\end{assumption}


\begin{assumption}
The controller in~\cref{eq:m_ctrl} enables exponential tracking of $\angular$. The time-scale of $\angularerr$ convergence is much shorter than $\verr$ and $\qerr$ thus can be treated as $\angular = \angularref$.
\label{assm:wtowr}
\end{assumption}

With the composite adaptation on force model and tracking controller defined previously, the convergence of $\verr$, $\qerr$, and $\therr$ is given below.
\begin{theorem}
By applying the controller, force allocation, and model adaptation from~\cref{eq:f_ctrl,eq:m_ctrl,eq:Rd,eq:f_proj,eq:th_adapt,eq:P_adapt}, the tracking errors $\norm{\verr}$, $\norm{\qerr}$ and parameter error $\norm{\therr}$ will exponentially converge to a bounded error ball.
\end{theorem}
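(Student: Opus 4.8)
The plan is to build a composite Lyapunov function that certifies the velocity, attitude, and parameter errors simultaneously, and then to reduce it to a differential inequality of the form $\dot V \le -\alpha V + c$, which gives exponential convergence into a ball of radius $\sim c/\alpha$. The candidate is
\begin{equation*}
V = \tfrac12\,\verr^\top\verr + \tfrac{k_q}{2}\norm{\qerr}^2 + \tfrac12\,\therr^\top\cov^{-1}\therr,
\end{equation*}
with $\therr=\thhat-\bm{\theta}$ and a weight $k_q>0$ fixed later. Before differentiating I would record two structural facts. First, since within the model range the true force shares the same linear-in-parameters form, $\fbhat-\fb=\basis\therr$, which is exactly the mismatch term already appearing in \cref{eq:verr_cl}. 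Second, because $\am$ is the filtered specific force and $\basisf=[\basis]_f$ uses the same SPR filter, linearity gives $\prederr=\basisf\thhat-\am=\basisf\therr$; this is the identity that turns the prediction-error channel of \cref{eq:th_adapt} into damping on $\therr$.

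Next I would differentiate each block. The velocity term, via \cref{eq:verr_cl}, yields $-\verr^\top\B{K}_v\verr+\verr^\top\ferr-\verr^\top\Rot\basis\therr$. For the parameter term I would use $\tfrac{d}{dt}\cov^{-1}=\basisf^\top\basisf-\forgetting\,\cov^{-1}$, obtained from \cref{eq:P_adapt}, together with the update law and $\prederr=\basisf\therr$; the $\basis^\top\Rot^\top\verr$ channel produces $+\verr^\top\Rot\basis\therr$, cancelling the corresponding cross term from the velocity block in the usual adaptive-control fashion. What survives is $-\tfrac12\therr^\top\basisf^\top\basisf\therr-\tfrac{\forgetting}{2}\therr^\top\cov^{-1}\therr-\robustify\,\therr^\top(\thhat-\thinit)$, which I would rewrite through $\thhat-\thinit=\therr+(\bm{\theta}-\thinit)$. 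For the attitude block, \cref{assm:wtowr} collapses the inner loop so the relative rate in the error-quaternion kinematics reduces to $-\B{\Lambda}_q\qerr$; since $\qerr^\top\B{S}(\qerr)=0$, this gives $\tfrac{d}{dt}\tfrac12\norm{\qerr}^2=-\tfrac12\qoerr\,\qerr^\top\B{\Lambda}_q\qerr$, which is negative because $\qoerr>0$.

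Collecting terms, the only sign-indefinite contributions are the velocity–attitude coupling $\verr^\top\ferr$ and the robustness offset $-\robustify\,\therr^\top(\bm{\theta}-\thinit)$. I would dominate the first with \cref{assm:ztoq} and Young's inequality, $\verr^\top\ferr\le\C{F}\norm{\verr}\norm{\qerr}\le\tfrac{\C{F}}{2}\bigl(\epsilon\norm{\verr}^2+\epsilon^{-1}\norm{\qerr}^2\bigr)$, and the second by $\robustify\norm{\therr}\norm{\bm{\theta}-\thinit}\le\tfrac{\robustify}{2}\norm{\therr}^2+\tfrac{\robustify}{2}\norm{\bm{\theta}-\thinit}^2$. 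Choosing $\B{K}_v$ and $k_q\B{\Lambda}_q$ large enough relative to $\C{F}$ (with $\epsilon$ tuned) makes the quadratic form in $(\verr,\qerr,\therr)$ negative definite, while $-\tfrac{\forgetting}{2}\therr^\top\cov^{-1}\therr$, being $-\forgetting$ times the parameter block, supplies the exponential rate $\forgetting$ on $\therr$. The leftover constant $\tfrac{\robustify}{2}\norm{\bm{\theta}-\thinit}^2$ sets the residual-ball radius, giving $\dot V\le-\alpha V+c$ and hence exponential convergence of $\norm{\verr}$, $\norm{\qerr}$, and $\norm{\therr}$ into a bounded set.

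I expect the main obstacle to be twofold. The delicate part is the coupling: $\verr^\top\ferr$ is sign-indefinite and is tamed only through \cref{assm:ztoq}, so the argument closes solely when the gains dominate $\C{F}$ — this is where the ``confined to a linear force regime'' hypothesis does the real work. The second subtlety is that $\tfrac12\therr^\top\cov^{-1}\therr$ is a legitimate Lyapunov term only while $\cov^{-1}$ stays bounded away from zero (equivalently $\cov$ bounded above), yet the forgetting term in \cref{eq:P_adapt} tends to inflate $\cov$; I would therefore invoke persistent excitation of $\basisf$ or a standard covariance-bounding modification keeping $\cov\in[\underline{p}\,\B{I},\overline{p}\,\B{I}]$, which both secures the parameter-block rate and makes $V$ equivalent to $\norm{\verr}^2+\norm{\qerr}^2+\norm{\therr}^2$ so that convergence of $V$ transfers to the individual errors.
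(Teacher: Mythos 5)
Your proposal is, in structure, the same proof as the paper's: a composite Lyapunov function with velocity, attitude, and $\cov^{-1}$-weighted parameter blocks; the identity $\prederr=\basisf\therr$ (valid under the linear-in-parameters assumption and linearity of the SPR filter) turning the prediction channel of \cref{eq:th_adapt} into $-\therr^\top\basisf^\top\basisf\therr$ damping; cancellation of the $\verr^\top\Rot\basis\therr$ cross term; \cref{assm:ztoq} plus a $2\times 2$ gain condition to absorb $\verr^\top\ferr$; and the $\robustify$-term setting the residual ball (your Young's-inequality treatment is equivalent to the paper's exact identity $\therr^\top(\thhat-\thinit)=\tfrac12\bigl(\norm{\therr}^2+\norm{\thhat-\thinit}^2-\norm{\bm{\theta}-\thinit}^2\bigr)$). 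The one place you genuinely depart from the paper is the attitude block, and it is where your argument has a gap. The paper uses $\gamma^{-1}(2-2\qoerr)$, whose derivative under \cref{assm:wtowr} is exactly $-\gamma^{-1}\qerr^\top\bm{\Lambda}_q\qerr$, with no $\qoerr$ factor. Your choice $\tfrac{k_q}{2}\norm{\qerr}^2$ produces, as you correctly compute, $-\tfrac{k_q}{2}\qoerr\,\qerr^\top\bm{\Lambda}_q\qerr$: the effective attitude damping coefficient is $k_q\,\qoerr\,\lambda_{\min}(\bm{\Lambda}_q)$, and $\qoerr$ is a state, not a constant --- it can be arbitrarily close to zero when the attitude error approaches $180^\circ$. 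Consequently no fixed choice of $k_q\bm{\Lambda}_q$ ``large enough relative to $\C{F}$'' makes your $2\times 2$ quadratic form uniformly positive definite, and the rate $\alpha$ in $\dot{V}\le-\alpha V+c$ degenerates as $\qoerr\to 0$. To close the argument you must either adopt the paper's attitude error (noting $\norm{\qerr}^2\le 2-2\qoerr\le 2\norm{\qerr}^2$ for $\qoerr\ge 0$, so the two blocks are comparable, but only the paper's has the clean derivative), or restrict to initial conditions or a sublevel set on which $\qoerr$ is bounded away from zero, which weakens the claim to a local statement.

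On your second caveat: you are right, and in fact more careful than the paper. The correct computation $\tfrac{d}{dt}\cov^{-1}=\basisf^\top\basisf-\forgetting\cov^{-1}$ gives $-\tfrac{\forgetting}{2}\therr^\top\cov^{-1}\therr$, which is exactly $-\forgetting$ times the parameter block, so that block's decay rate needs no bound on $\cov$ at all (the paper's $\forgetting\cov$ inside the parentheses, and its subsequent use of $c_P=\lambda_{\min}(\cov)$, look like a slip). But inferring exponential convergence of $\norm{\therr}$ itself from convergence of $\C{V}$ does require $\lambda_{\max}(\cov)$ bounded above, and the forgetting term tends to inflate $\cov$ absent excitation of $\basisf$; your persistent-excitation or covariance-bounding remark patches a step the paper leaves implicit.
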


\begin{proof}
A Lyapunov function consisting of tracking error and prediction error terms is selected
\begin{equation}
    \C{V} = \frac{1}{2}\verr^\top \verr + \gamma^{-1} (2-2 \qoerr) + \frac{1}{2}\therr^\top \cov^{-1}\therr.
\end{equation}
where the global attitude tracking error $2-2\qoerr$ is used, similar to~\cite{lee2012exponential} for $SO(3)$, with its time derivative
\begin{equation*}
    \dqoerr = -\frac{1}{2}\qerr ^\top  \left(\angular - \Rerr^\top \angulard\right).
\end{equation*}
Taking the time derivative of $\C{V}$ and substituting in~\cref{eq:verr_cl,eq:th_adapt,eq:P_adapt} with~\cref{assm:wtowr}, we get
\begin{equation*}
\begin{split}
    \dot{\C{V}} &= - \verr^\top\B{K}_v\verr + \verr^\top \ferr - \gamma^{-1}\qerr^\top \bm{\Lambda}_q \qerr \\
    & \qquad - \frac{1}{2}\therr^\top\left(\basisf^\top \basisf + \forgetting \cov \right)\therr \\
    & \qquad -\frac{\robustify}{2}\left(\norm{\therr}^2 + \norm{\thhat - \thinit}^2 - \norm{\bm{\theta} - \thinit}^2\right) 
\end{split}
\end{equation*}
We define the minimum eigenvalues of positive definite matrices, $c_v = \lambda_{\min}(\B{K}_v)$, $c_q = \lambda_{\min}(\B{\Lambda}_q)$ and $c_P = \lambda_{\min}(\B{P})$. Applying~\cref{assm:ztoq} and using the fact that $\basisf^\top\basisf \geq 0$, we arrive at
\begin{equation*}
\begin{split}
    \dot{\C{V}} &\leq - c_v \norm{\verr}^2 - \frac{c_q}{\gamma}\norm{\qerr}^2 - \frac{\forgetting c_P + \robustify}{2} \norm{\therr}^2 + \frac{\robustify}{2}\norm{\thinit}^2 \\
    & \qquad \qquad + \C{F}\norm{\verr}\norm{\qerr} \\
    &= -
    \begin{bmatrix}
    \norm{\verr}\\
    \norm{\qerr}
    \end{bmatrix}^\top
    \begin{bmatrix}
    c_v & \C{F}/2 \\
    \C{F}/2 & c_q/\gamma
    \end{bmatrix}
        \begin{bmatrix}
    \norm{\verr}\\
    \norm{\qerr}
    \end{bmatrix} - \frac{\forgetting c_P + \robustify}{2} \norm{\therr}^2 \\
    & \qquad \qquad + \frac{\robustify}{2}\norm{\bm{\theta} - \thinit}^2
\end{split}
\end{equation*}
By selecting a $\gamma < (4 c_v c_p)/\C{F}^2$, we can find a constant $c$, such that $\dot{\C{V}} < - c\C{V} + \frac{\robustify}{2}\norm{\thinit}^2$. This proves that $\norm{\verr}$, $\norm{\qerr}$ and $\norm{\therr}$ will exponentially converge to an error ball bounded by initial parameter error $\norm{\bm{\theta} - \thinit}$.
\end{proof}

\section{Model Fitting and Airflow Sensing}
\label{sec:aero}
\subsection{Aerodynamic and Thruster Coefficients}
\label{ssec:aerotest}
To accurately model the aerodynamic characteristics, the aircraft was mounted on a force sensor with all control surfaces in a neutral position. The data collected was for free-stream velocities up to \SI{9}{m/s}, and parameters were fit using Bayesian linear regression for the model described in~\cref{eq:model_cl,eq:model_cd}. Similarly, single motor and propeller combinations were tested to fit the model from~\cref{eq:model_T,eq:model_FS}. 

\begin{table}[b]
\renewcommand{\arraystretch}{1.3}
\centering
\caption{Fitted Parameters and Their Standard Deviations}
    \begin{tabular}{c c|c c}
    \hline
    {Thruster} & {mean $\pm$ std} & {Aero} & {mean $\pm$ std} \\
    \hline
    \hline
     $C_{Tx}$ & $\expnumber{3.02}{-3}\pm\expnumber{2.25}{-5}$  & $\clo$ & $0.3705\pm 0.06523$ \\
     $C_{Tz}$ & $\expnumber{2.87}{-3}\pm\expnumber{5.00}{-6}$ & $\cla$ & $3.2502\pm0.04434$ \\
     $C_{S}$  & $\expnumber{2.31}{-5}\pm\expnumber{1.18}{-5}$ & $\cdo$  & $0.1551\pm0.00394$\\
     $k_1$    & $1.425\pm0.010$ & $\cda$  & $0.1782\pm0.06518$ \\
     $k_2$    & $3.126\pm0.087$ & $\cdaa$ & $1.6000\pm0.10820$\\
    \hline
    \end{tabular}
    \label{tab:param_fit}
\end{table}

To construct an empirically based model for rotor side force $F_S$, force data was collected for a propeller in forward flight at various angles of attack ranging from $-45^{\circ}$ to $45^{\circ}$ as shown in~\cref{fig:rotor_Drag_Data}. Dimensional analysis was then used to develop a relation. We assume that the side force depends on the freestream velocity $\vinf$, the air density $\rho$, the propeller's angular velocity $n$, and the propeller diameter $d$. Additionally, we expect some variation with $\alpha$, which is a dimensionless quantity. Per symmetry, there should be no side force at angles of attack of $\pm \ang{90}$. Data also indicates that $F_S$ achieves a maximum at an angle other than $\ang{0}$. Hence, we use a third order polynomial of $\alpha$ with zeros at $\pm \ang{90}$ to construct the model. These criteria combine to form the expression in~\cref{eq:model_Fs}. 

\begin{figure}[t]
\centering{
    \subfloat[Non-dimensionalized rotor side force data with numerical fit]{
        \includegraphics[width=.45\linewidth]{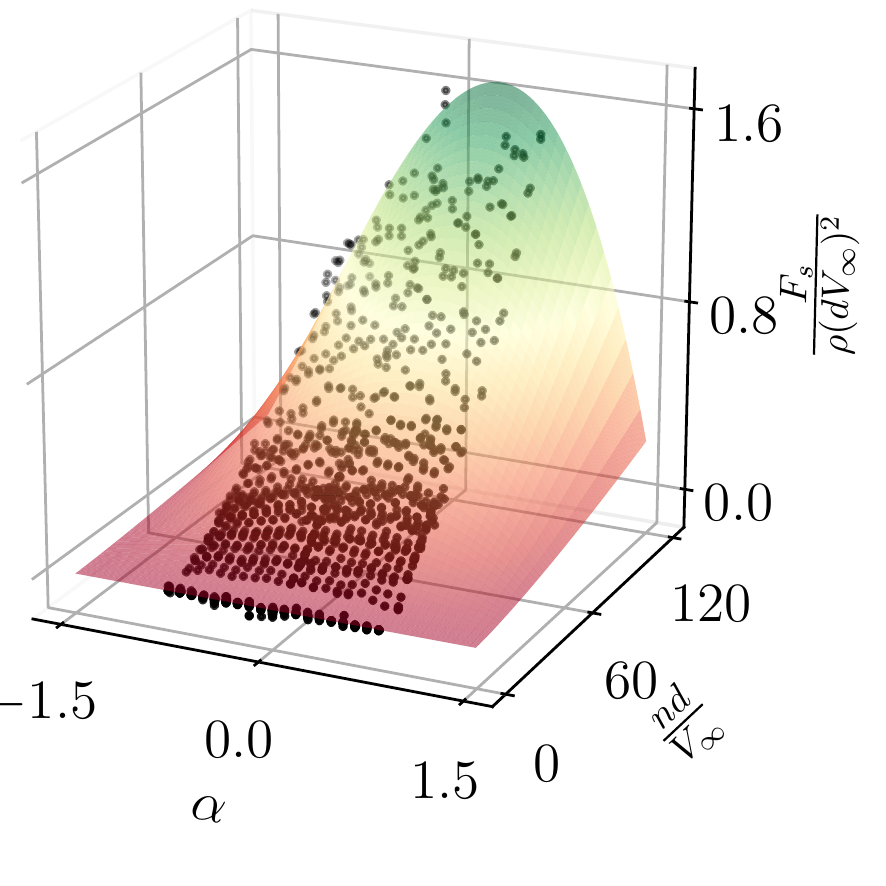}\label{fig:rotor_Drag_Data}
    }\hspace{0.1cm}
    \subfloat[Linearity of flow angle measurement]{
        \includegraphics[width=.45\linewidth]{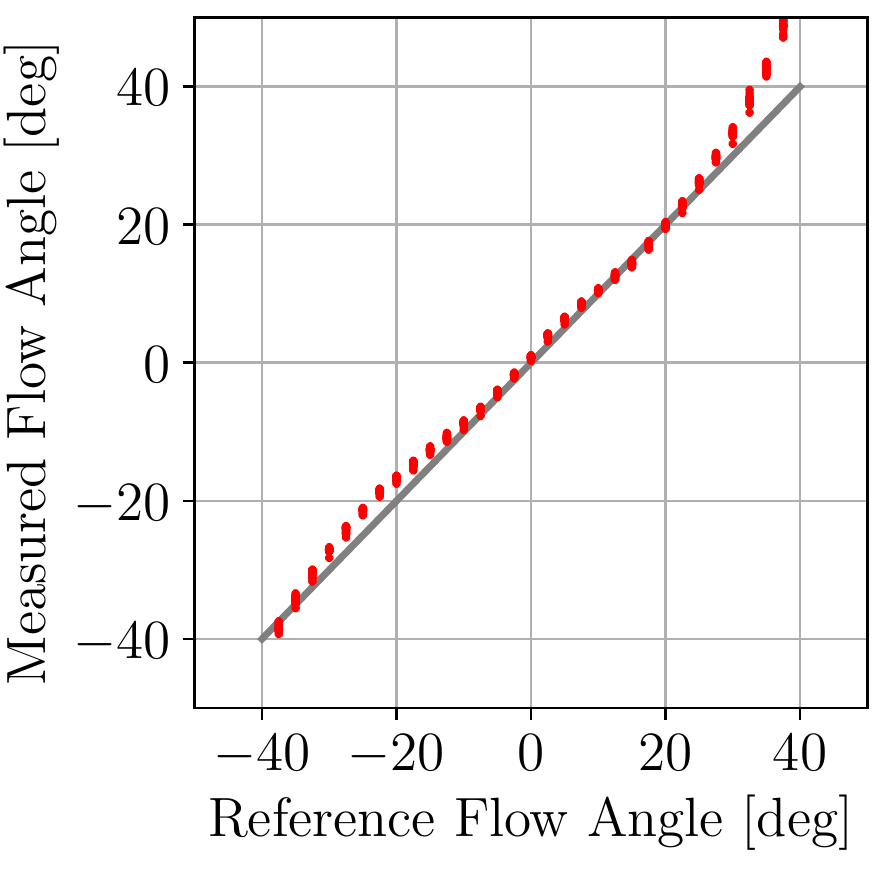}\label{fig:flow_sensor_linearity}
    }
}
\caption{Model fits for rotor side force and airflow vector sensor}
\end{figure}

The results of the fits are summarized in~\cref{tab:param_fit}, split by thruster parameters and aerodynamic parameters. The standard deviation is from the posterior distribution of parameters after performing Bayesian linear regression using Markov Chain Monte-Carlo (MCMC) method.

\subsection{Airflow Sensing}
\label{ssec:3dsensor}

To measure the angle of attack ($\alpha$) and the sideslip angle ($\beta$) in flight, a custom 3D airflow sensor (\cref{fig:av_vtol}) containing three differential pressure sensors and a conic tip~\cite{popowski2015measurement} was developed. The sensor uses the SDP33 chip from Sensirion, a pressure sensor based on thermal mass flow~\cite{sensirion:sdp33}. These sensors have almost no zero-pressure offset and drift (\SI{0.2}{\pascal}), which makes them ideal for sensing flow angles. In addition, their fast response ($<\SI{3}{\milli\second}$) allows them to be used in the attitude control loop.  The conic tip has four orifices placed symmetrically relative to a central orifice. The difference in pressure for each pair of holes varies with $\alpha$ and $\beta$. The relationship between the differential pressures ($q_{\infty}, q_{\alpha}, q_{\beta}$) and the incident velocity $\vi$ in the body frame is modeled as:
\begin{equation}
\label{eq:aoa}
\vi =
\sqrt{\frac{2q_{\infty}/\rho}{q_{\infty}^2 + (k q_{\beta})^2 + (k q_{\alpha})^2 }}
\begin{bmatrix} 
q_{\infty}  \\
k q_{\beta} \\
k q_{\alpha} 
\end{bmatrix}
\end{equation}
A wind tunnel was used to calibrate the sensor and fit the coefficient $k$ from the above equation. The linearity of estimated flow angle is shown in~\cref{fig:flow_sensor_linearity}.

The feedback from the sensor provides an accurate estimate of the incident airspeed vector $\vi$, which enables the aircraft to fly at desired aerodynamic conditions. The airspeed vector is used in the force model described by \cref{eq:fAhat,eq:fThat} in order to compensate for the aerodynamic forces and to provide an adaptation basis $\basis$.



\section{Experiments}
\label{sec:exp}
We used a custom fixed-wing VTOL aircraft (\cref{fig:av_vtol}) running a modified PX4 firmware on a Pixhawk flight controller~\cite{meier2012pixhawk}. The properties of the aircraft are summarized in~\cref{tab:vtol_aircraft_properties}. We conducted indoor experiments using the Real Weather Wind  Tunnel from the Center for Autonomous Systems and Technologies (CAST). The fan array based wind tunnel can generate uniform wind fields up to \SI{12.9}{m/s} and produces wind speeds linearly with input throttle. The facility is equipped with motion capture cameras which allow closed-loop position control in front of the fan array.
\begin{figure}
\centering
\includegraphics[width=\linewidth]{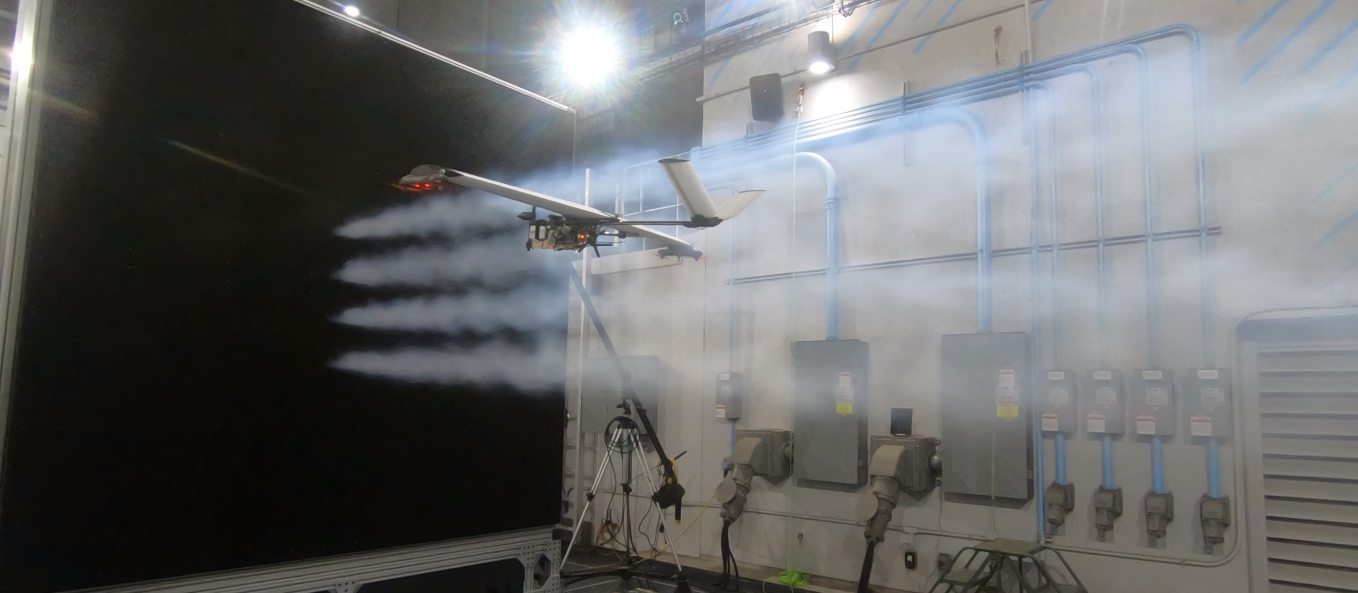}
\caption{The VTOL flying in position control in front of the CAST Fan Array, with smoke for flow visualization}
\label{fig:vtol_in_front_of_fan_array}
\end{figure}

\begin{table}[tb]
\renewcommand{\arraystretch}{1.3}
\centering
\caption{Fixed-wing VTOL aircraft properties}
\begin{tabular}{c|c||c|c}
\hline
Weight         & \SI{1.70}{kg}         & Lift Motor & T-motor F80-2200\\ 
Wingspan       & \SI{1.08}{m}        & Front Motor    & T-motor AT2312-1150\\ 
Wing Area      & \SI{0.223}{m^2}       & Lift Prop.     & King Kong 6"x4"              \\ 
Battery & 6S \SI{1.3}{Ah} & Front Prop.         & APC 7"x4"    \\ \hline
\end{tabular}
\label{tab:vtol_aircraft_properties}
\end{table}

During the experiment, the VTOL keeps its position fixed in front of the fan array, as shown in \cref{fig:vtol_in_front_of_fan_array}. The control loop as well as the force allocation and parameter adaptation is running at \SI{250}{Hz}. Given our estimate of $\thhat$'s posterior distribution from aerodynamic testing shown in~\cref{tab:param_fit}, we limit the maximum deviation of $\thhat$ to stay within fixed bounds around the initial parameter estimate. 
The filter used for the acceleration measurement and prediction \cref{eq:prediction_error} is a first order low-pass filter with a cutoff frequency of \SI{10}{Hz}. In all experiments, $\cov$ is initialized to a diagonal matrix ($\cov_0$) with elements listed in \cref{tab:adaptation_gains}.

\begin{table}[b]
    \renewcommand{\arraystretch}{1.3}
    \centering
    \caption{Initial adaptation gains: diagonal of $\cov_0$}
    \begin{tabular}{c c c c c c c c}
    $\ctx$ & $\ctz$ & $\cfs$ & $\cdo$ & $\cda$ & $\cdaa$ & $\clo$ & $\cla$ \\
    \hline
     $200$ & $200$ & $0.1$ & $1$ & $5$ & $20$ & $0.1$ & $0.1$ \\
    \hline
    \end{tabular}
    \label{tab:adaptation_gains}
\end{table}

\subsection{Aerodynamic Parameters Convergence}

\cref{fig:parameter_convergence} shows parameter convergence for seven runs of the composite adaptation controller from~\cref{eq:th_adapt,eq:P_adapt}, each with randomly sampled initial parameters. Each experiment was started with the vehicle hovering in still air. The fan array was then sequentially set to 30\%, 50\%, and 70\% throttle, which roughly corresponds to \SI{4}{m/s}, \SI{6.5}{m/s} and \SI{9}{m/s} wind speed, as the vehicle continued to maintain its position. 

As the test conditions stay the same across different runs, and only initial parameter are varied, it can be seen from~\cref{fig:parameter_convergence} that the components of $\thhat$ converge to some patterns, indicated by the lower variance as time progresses. It is also interesting to note that $\clo$, $\cla$ and $\ctz$ have cleaner trends than the other variables, indicating steady and accurate force estimates in the vertical direction. There seem to be bigger variations in $\cda$ estimates. Overall, the parameter convergence is consistent with varying initial conditions, which reflects the robustness of the method.

\begin{figure}[t]
\centering
\includegraphics[width=\linewidth]{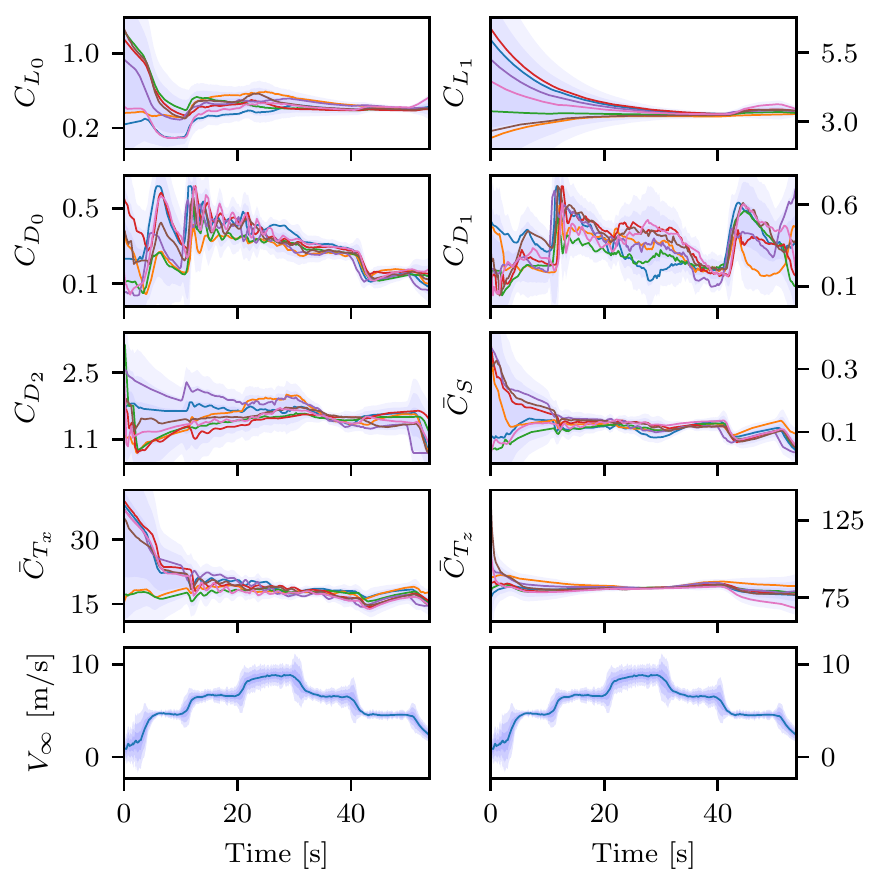}
\caption{Convergence of aerodynamic coefficients estimates for seven runs with random initial $\thhat$. The filled region represents 1$\sigma$ to 3$\sigma$ deviations. Freestream Velocity is duplicated for visualization}
\label{fig:parameter_convergence}
\vspace{-5 mm}
\end{figure}

\subsection{Comparison of Different Feedback Control Schemes}

\begin{figure}[t]
\centering
\includegraphics[width=\linewidth]{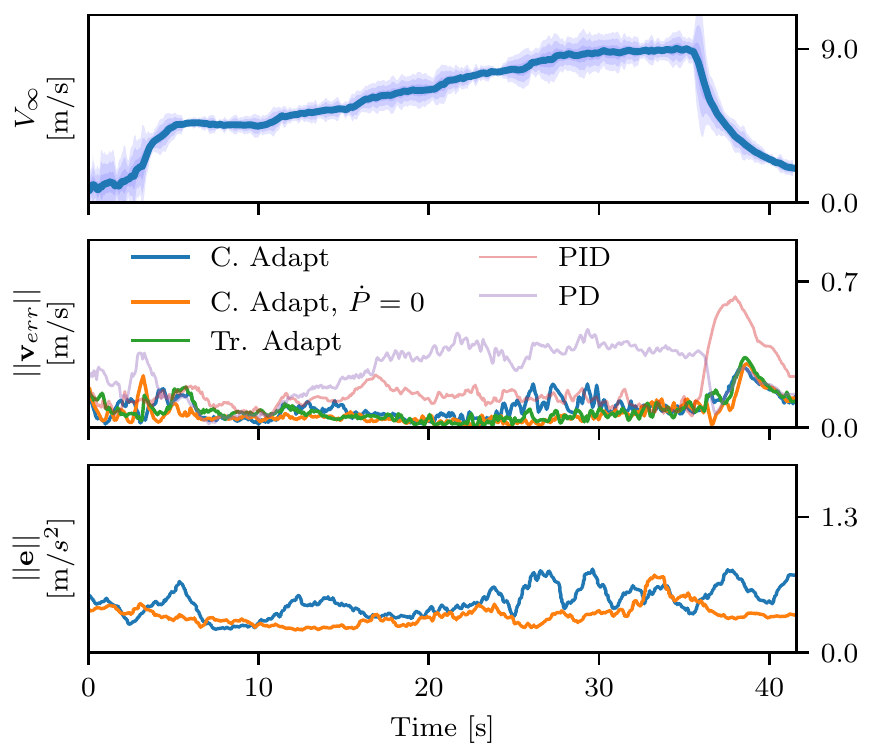}
\caption{From top to bottom, figures show measured total airspeed from 3d airflow sensor, norm of velocity tracking errors for all five controllers and norm of filtered (for visualization) prediction errors for the two composite adaptation controllers}
\label{fig:tracking_comparison}
\end{figure}

The tests started with vehicle in hover and the fan array off, the throttle is increased to 30\% (\SI{4}{m/s} wind) for 10 seconds and then to 70\% (\SI{9}{m/s} wind) in 10\% increments every 5 seconds. The fan array was switched off after 10 seconds at top throttle. The same experiment profile was used to test five different control and adaptation schemes: \textbf{(I)} The full implementation based on~\cref{eq:f_ctrl,eq:th_adapt,eq:P_adapt}, (C. Adapt); \textbf{(II)} Same method, but with $\cov$ update disabled (C. Adapt $\dot{\cov}=0$); \textbf{(III)}. Tracking error only adaptation, with 
$\dthhat = \cov_0 \left(\basis^\top\Rot^\top\verr\right)$; \textbf{(IV)} Proportional-Integral-Derivative (PID) controller; and \textbf{(V)} Proportional-Derivative (PD) controller.

The adaptive controllers (I) to (III) use the controller from~\cref{eq:f_ctrl}. And the PID and PD controller are based on Pixhawk's default implementation of multirotor control
$    \fr = -\gravity - \B{K}_P \verr - \B{K}_D \dverr -\B{K}_I \int_{0}^{t} \verr(\tau) \ d\tau 
$
with positive definite gain matrices $\B{K}_P$, $\B{K}_I$, $\B{K}_D$. The force allocation technique described in~\cref{sec:f_alloc} is still active with constant $\thhat$ using values from~\cref{tab:param_fit}. The equivalent gains used in all five controllers are held the same for consistent comparison. 

It is clear to see from~\cref{fig:tracking_comparison} that as airspeed increases, the PD controller started to accumulate unrecoverable large tracking errors. The PID controller can account for some model uncertainties and is almost on par with the adaptive controllers, however, we can see its slow response to changing disturbance as the vehicle overshoots significantly when airspeed ramps down.

\begin{figure}[t]
\centering
\includegraphics[width=\linewidth]{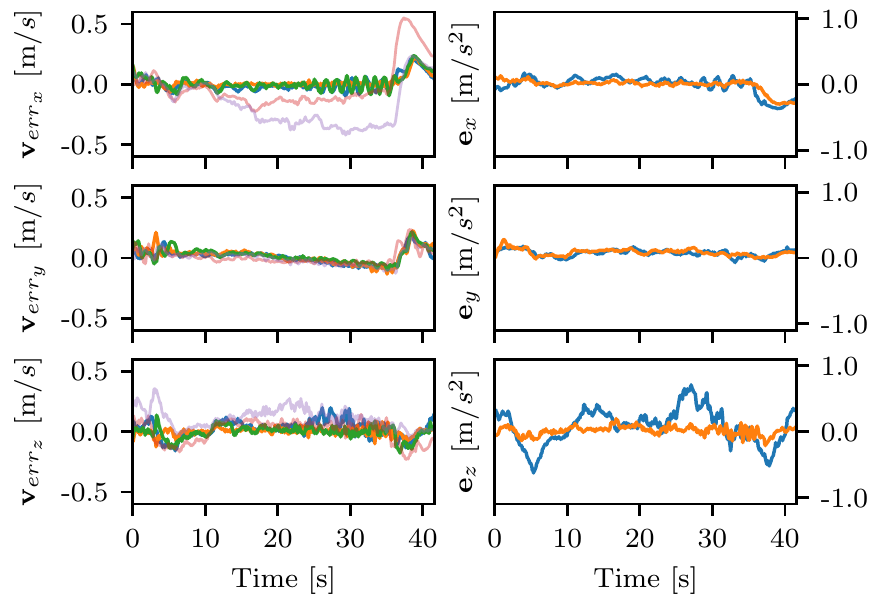}
\caption{Velocity tracking error and filtered (for visualization) acceleration prediction error of the same experiment as in \cref{fig:tracking_comparison}, for each axis individually. Velocity error is in Earth fixed North-East-Down frame with the vehicle flying north; acceleration is in body frame}
\label{fig:tracking_comparison_individual_axes}
\vspace{-4 mm}
\end{figure}

All versions of the adaptive controllers have better overall velocity tracking and do not suffer from the large overshoot that the PID has during sudden changes of the environment. For the two composite controllers, the prediction is kept as low as $0.05\gravity$ in error consistently. Interestingly, the constant gain composite controller actually outperforms the full implementation in terms of prediction accuracy. This is more prominent in~\cref{fig:tracking_comparison_individual_axes}, which shows the $\prederr$ for C. Adapt lies mostly in body \zaxis.

\section{Conclusions}
\label{sec:conclusion}
We have introduced a set of linear-in-parameter force models with good steady-state accuracy appropriate for fixed-wing VTOL. The 3D airflow sensor provides crucial realtime information related to the aerodynamic forces on the vehicle. The adaptation law, together with the controllers, is proven to possess the stability and robustness needed for high speed transition flight. After being integrated into our custom hybrid VTOL, the composite adaptive controller with 3D airflow sensor feedback greatly improves the tracking and prediction performance over baseline methods. 

\addtolength{\textheight}{-9cm}   





\newpage
\bibliographystyle{IEEEtran}
\bibliography{IEEEabrv,ref}

\end{document}